\theoremstyle{plain}
\newtheorem{theorem}{Theorem}[section]
\newtheorem{lemma}[theorem]{Lemma}
\theoremstyle{definition}
\theoremstyle{remark}
\newcommand{\etal}{\textit{et al}.}
\newcommand{\ie}{\textit{i}.\textit{e}.}
\newcommand{\eg}{\textit{e}.\textit{g}.}
\icmltitlerunning{SLAB: Efficient Transformers with Simplified Linear Attention and Progressive Re-parameterized Batch Normalization}
\begin{document}

\twocolumn[
\icmltitle{SLAB: Efficient Transformers with Simplified Linear Attention and Progressive Re-parameterized Batch Normalization}

\icmlsetsymbol{equal}{*}

\begin{icmlauthorlist}
\icmlauthor{Jialong Guo}{huawei,equal}
\icmlauthor{~~~~Xinghao Chen}{huawei,equal}%
\icmlauthor{~~~~Yehui Tang}{huawei}
\icmlauthor{~~~~Yunhe Wang}{huawei}%
\end{icmlauthorlist}

\icmlaffiliation{huawei}{Huawei Noah's Ark Lab}

\icmlcorrespondingauthor{Xinghao Chen}{xinghao.chen@huawei.com}
\icmlcorrespondingauthor{Yunhe Wang}{yunhe.wang@huawei.com}

\icmlkeywords{Machine Learning, ICML}

\vskip 0.3in
]

\printAffiliationsAndNotice{\icmlEqualContribution} %

\begin{abstract}
Transformers have become foundational architectures for both natural language and computer vision tasks. However, the high computational cost makes it quite challenging to deploy on resource-constraint devices. This paper investigates the computational bottleneck modules of efficient transformer, \ie, normalization layers and attention modules. LayerNorm is commonly used in transformer architectures but is not computational friendly due to statistic calculation during inference. However, replacing LayerNorm with more efficient BatchNorm in transformer often leads to inferior performance and collapse in training. To address this problem, we propose a novel method named PRepBN to progressively replace LayerNorm with re-parameterized BatchNorm in training. 
Moreover, we propose a simplified linear attention (SLA) module that is simple yet effective to achieve strong performance. Extensive experiments on image classification as well as object detection demonstrate the effectiveness of our proposed method. For example, our SLAB-Swin obtains $83.6\%$ top-1 accuracy on ImageNet-1K with $16.2$ms latency, which is $2.4$ms less than that of Flatten-Swin with $0.1\%$ higher accuracy. 
We also evaluated our method for language modeling task and obtain comparable performance and lower latency.
Codes are publicly available at 
\href{https://github.com/xinghaochen/SLAB}{https://github.com/xinghaochen/SLAB} and \href{https://github.com/mindspore-lab/models/tree/master/research/huawei-noah/SLAB}{https://github.com/mindspore-lab/models/}.
\end{abstract}

\section{Introduction}

Introduced initially for tasks in natural language processing~\cite{vaswani2017attention}, transformer architecture has rapidly emerged as a preeminent model in the landscape of language models. Its influence has significantly expanded with the introduction of Vision Transformer (ViT)~\cite{dosovitskiy2020image}, illustrating the efficacy and versatility of transformer-based architectures. These architectures have demonstrated their capability to achieve competitive performance benchmarks in comparison to convolutional neural networks (CNNs) across diverse vision tasks~\cite{han2022survey,wang2022multimodal,zheng2023less,tang2023category,carion2020end,xu2023fdvit}. 
Due to its powerful performance, transformer has become the mainstream architecture in deep learning. 
However, the computational demands of transformer architecture pose a significant challenge, which is predominantly due to the quadratic computational complexity of its attention mechanism and the necessity for online statistic computation of LayerNorm component.

Numerous efforts have been directed towards enhancing the efficiency of transformer architecture~\cite{tang2024survey,wu2023ppt,tang2023dynamic}.
Several approaches have sought to mitigate computational complexity by limiting the scope of token interactions within self-attention mechanisms, such as downsampling the key and value matrices~\cite{wang2021pyramid}, implementing sparse global attention patterns~\cite{child2019generating}, and computing self-attention within smaller windows~\cite{tu2022maxvit, liu2021swin, dong2022cswin}. 
Meanwhile, linear attention emerges as an alternative strategy to enhance computational efficiency by breaking down the attention mechanism into linear computational cost~\cite{katharopoulos2020transformers,cai2022efficientvit,han2023flatten,you2023castling}, yet it is still a challenging task to obtain a good balance between efficiency and accuracy. 
Moreover, there are some explorations into substituting LayerNorm (LN) with BatchNorm (BN) within transformers, motivated by the additional computational overhead LayerNorm incurs during inference. 
Yang~\etal~\yrcite{yang2022unified} propose to add a BatchNorm layer in-between the two linear
layers in the feed forward network to stabilize the training.
However, there still exists a performance gap between the LayerNorm-based and BatchNorm-based transformers.

\begin{figure}[tb]
	\centering
	\includegraphics[width=0.97\columnwidth]{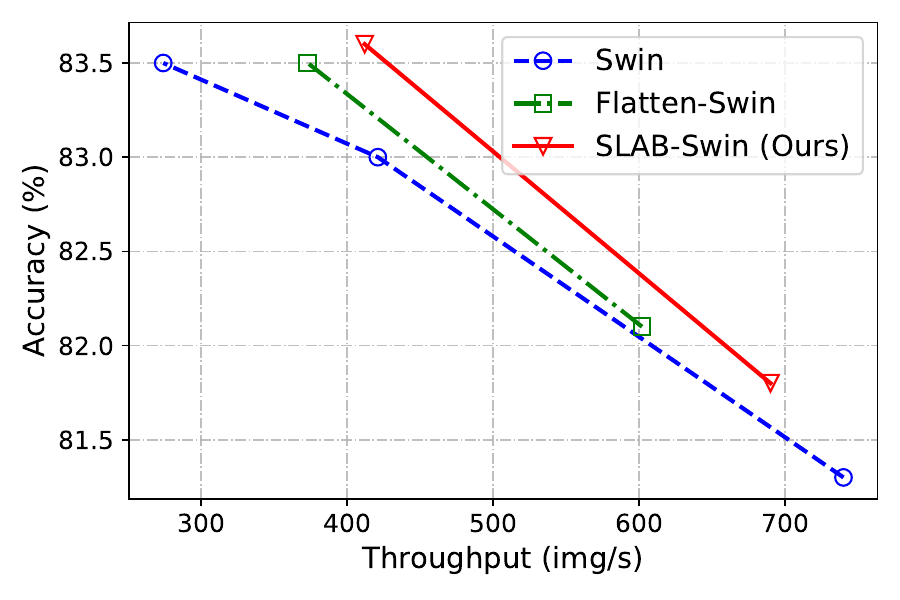}
	\vspace{-6mm}
	\caption{Comparisons of different methods on ImageNet.}
	\label{result1}
	\vspace{-8mm}
\end{figure}

In this paper, we focus on obtaining efficient transformer architectures by digging deep into the computational inefficient modules, \ie, normalization layers and attention modules.
We first explore to replace LayerNorm with BatchNorm to accelerate inference for transformer. 
BatchNorm leads to lower inference latency but may cause training collapse and inferior performance, while LayerNorm could stabilize the training yet has extra computational cost during inference. 
To this end, we first propose a progressive strategy to gradually replace LayerNorm with BatchNorm by using a hyper-parameter to control the proportion of both normalization layers. Initially the transformer architecture is dominated by the LayerNorm and gradually transits to pure BatchNorm at the end of training. This strategy effectively mitigates the risk of training collapse and also eliminating the need for calculating statistics during inference.
In addition to the progressive strategy, we also propose a novel re-parameterization formula for BatchNorm (RepBN), to enhance training stability and overall performance.

Furthermore, the computational cost of attention is critical for efficient transformer and prior methods struggle to obtain good balance of efficiency and accuracy. To this end,  we propose a simplified linear attention (SLA) module which utilizes ReLU as the kernel function and incorporate a depth-wise convolution to perform local feature enhancement. The proposed attention mechanism is more efficient than prior linear attention but still attains comparable performance.

We extensively evaluate our proposed method for various architectures on various benchmarks. Our progressive re-parameterized BatchNorm shows strong performance for image classification and object detection tasks, obtaining similar accuracy with lower inference latency. Moreover, coupled with the progressive RepBN and simplified linear attention module, our SLAB transformer achieves competitive accuracy compared to Flatten transformer with improved computational efficiency. For example, SLAB-Swin-S achieves 83.6\% Top-1 accuracy on ImageNet-1K with $16.2$ms latency, which is $2.4$ms less than that of Flatten-Swin-S with $0.1\%$ higher accuracy. We also evaluated our method for language modeling task and obtain comparable performance and lower inference latency.

\section{Related Work}

\subsection{Efficient Architecture for Transformers}
With the advent of the pioneering Vision Transformer (ViT)~\cite{dosovitskiy2020image},  the potential of the transformer architecture for computer vision tasks has been greatly explored. Various researchers are devoted to this field to make transformer-based architecture more efficient and powerful. Touvron ~\etal~\yrcite{touvron2021training} propose DeiT which utilizes distillation to achieve strong performance with training only on ImageNet1K. Liu~\etal~\cite{liu2021swin} propose Swin Transformer, which introduces shifted windowing scheme and brings greater efficiency. As the self-attention computation is limited to a small window, this transformer has linear computational complexity. Several works improve the design of sparse pattern to enhance the interaction of each token, such as CSwin~\cite{dong2022cswin}. Besides, the dynamic attention mechanism tries to control the key/value interact with query adaptive to data, such as DAT++~\cite{xia2023dat++} and BiFormer~\cite{zhu2023biformer}.

Apart from the above methods, linear attention is a popular research direction to reduce the computational complexity for transformer. Many effective mechanisms have been proposed to replace the softmax function. For example, Performers~\cite{choromanski2020rethinking} uses positive orthogonal random features approach to approximate softmax. Hydra attention~\cite{bolya2022hydra} selects the cosine similarity as kernel function. Flatten Transformer~\cite{han2023flatten} designs a focused function to improve the focus ability of linear attention.

\begin{figure*}[h]
	\begin{center}
		\includegraphics[width=0.99\linewidth]{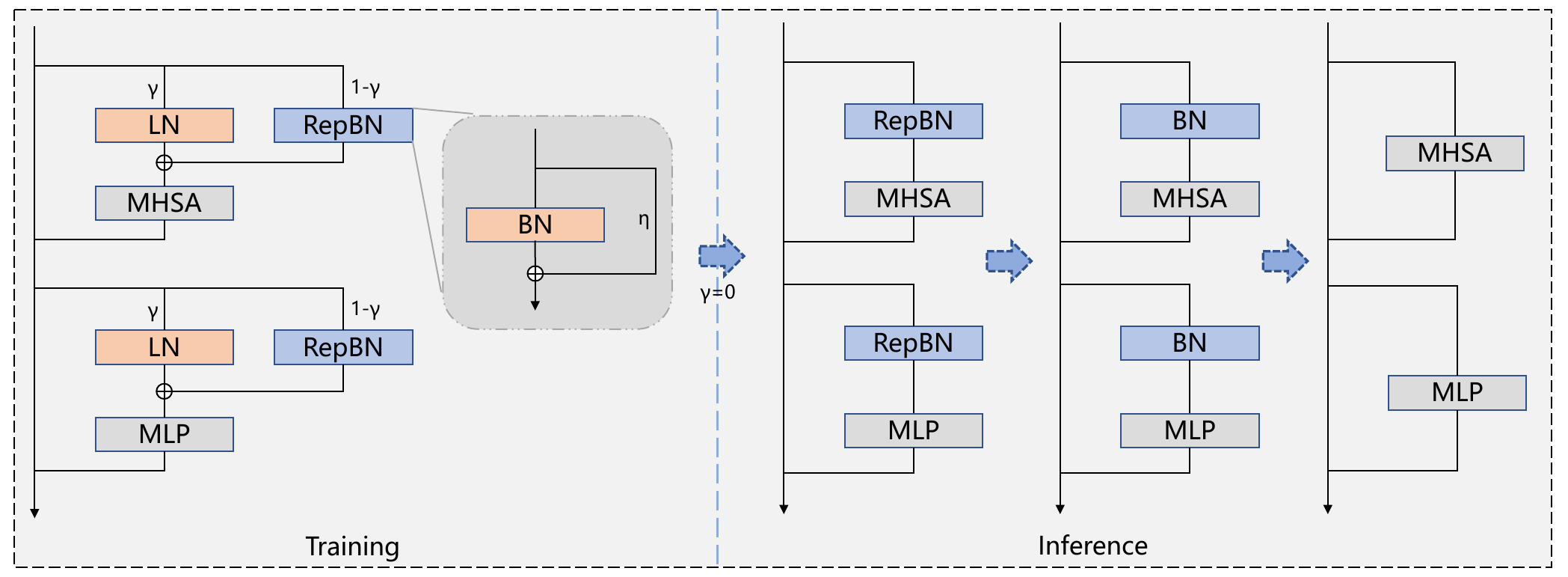}
		\vspace{-3mm}
		\caption{The overall framework of our proposed Progressive Re-parameterized BatchNorm. (a) During training, we progressively replace LayerNorm with RepBN, which is a new re-parameterization formula of BatchNorm to further improve the performance. (b) We could get $\gamma=0$ during inference, thus the transformer block transits to a RepBN-based architecture, which could further be re-parameterized to BatchNorm and merged with linear layers.}
		\label{model-structure}
	\end{center}
	\vskip -0.2in
\end{figure*}

\subsection{Normalization for Transformers}
Normalization is known as an useful method to make training stable and boost performance. Nowadays, a variety of normalization methods have been proposed, such as BatchNorm~\cite{ioffe2015batch}, LayerNorm~\cite{ba2016layer}, InstanceNorm~\cite{ulyanov2016instance}, GroupNorm~\cite{wu2018group}, MABN~\cite{yan2020towards} and UN~\cite{yang2022unified}. BatchNorm is widely used in convolutional networks and LayerNorm is commonly utilized for networks such as transformer and LSTM.

Normalization could be categorized into offline methods and online methods according to whether the mean and variance need to be computed at inference time~\cite{yang2022unified}. 
Online methods are usually  batch-irrelevant like LayerNorm, InstanceNorm and GroupNorm. These methods compute the statistics in both training and inference. LayerNorm is a commonly used in transformer architecture.

Offline methods are batch-related like BatchNorm and UN, in which the batch dimension is concluded in the calculations of both mean and variance~\cite{yao2021leveraging}. As the mean and variance are pre-computed in inference, offline normalization can be fused into adjacent linear operations. During inference, there will be no offline normalization operations and the inference time will be reduced. However, offline methods usually face the problem of performance degradation and training collapse while using in transformer. To address this problem, Yao ~\etal~\yrcite{yao2021leveraging} proposes to add a BatchNorm layer in-between the two linear layers in the MLP block that makes training statistics stable. Yang ~\etal~\yrcite{yang2022unified} finds that the issue is caused by abnormal behaviors of activation statistics, and proposes a tailored fluctuation smoothing strategy and an adaptive outlier filtration strategy to boost performance and stable training.

\section{Preliminaries}

Given the input $N$ tokens $X \in \mathbb{R}^{N \times C}$, where $C$ is the feature dimension, the general architecture of transformer block can be written as:
\begin{equation}
	\begin{split}
	X = X + \mathrm{Attn}(\mathrm{Norm}(X)), \\
	X = X + \mathrm{MLP}(\mathrm{Norm}(X)),
	\end{split}
\end{equation}
where $\mathrm{Attn}(\cdot)$ calculates the attention scores, $\mathrm{MLP}(\cdot)$ denotes multilayer perceptron and $\mathrm{Norm}(\cdot)$ is the normalization function.
In the default configuration of transformer block, $\mathrm{Norm}(\cdot)$ is usually a LayerNorm operation and $\mathrm{Attn}(\cdot)$ is the softmax-based attention mechanism~\cite{vaswani2017attention}. 

Attention plays an important role in Transformer. Denote query, key and value matrix as $Q, K, V \in \mathbb{R}^{N \times C}$, softmax attention computes the pairwise similarity between queries and keys firstly, and leads to the quadratic computation complexity $O(N^{2}C)$ in relation to the number of queries and keys N. This makes transformer computationally expensive especially in dealing with tasks that have a long sequence input. Linear attention aims to decouple the softmax function with proper approximation or instead it with other kernel function to compute $K^{T}V$ first. With this change in computation order, the computation complexity becomes $O(NC^{2})$, which is linearly related to the number of queries and keys N.

However, LayerNorm occupies unnegligible portion of latency since it requires statistic calculation during inference. Therefore, in this paper we explore to leverage BatchNorm for building efficient transformers, which only exists in training and could be merged with preceding or sequential linear layers. Moreover, the attention module plays the most important part for transformers and the softmax-based attention mechanism is computational inefficient due to its quadratic computation complexity. In this paper, we propose a simple yet efficient form of attention, which greatly reduce the latency but also remains strong performance on various vision tasks.

\section{Methods}

\begin{figure*}[ht]
	\begin{center}
		\subfigure[DeiT]{\includegraphics[width=0.33\linewidth]{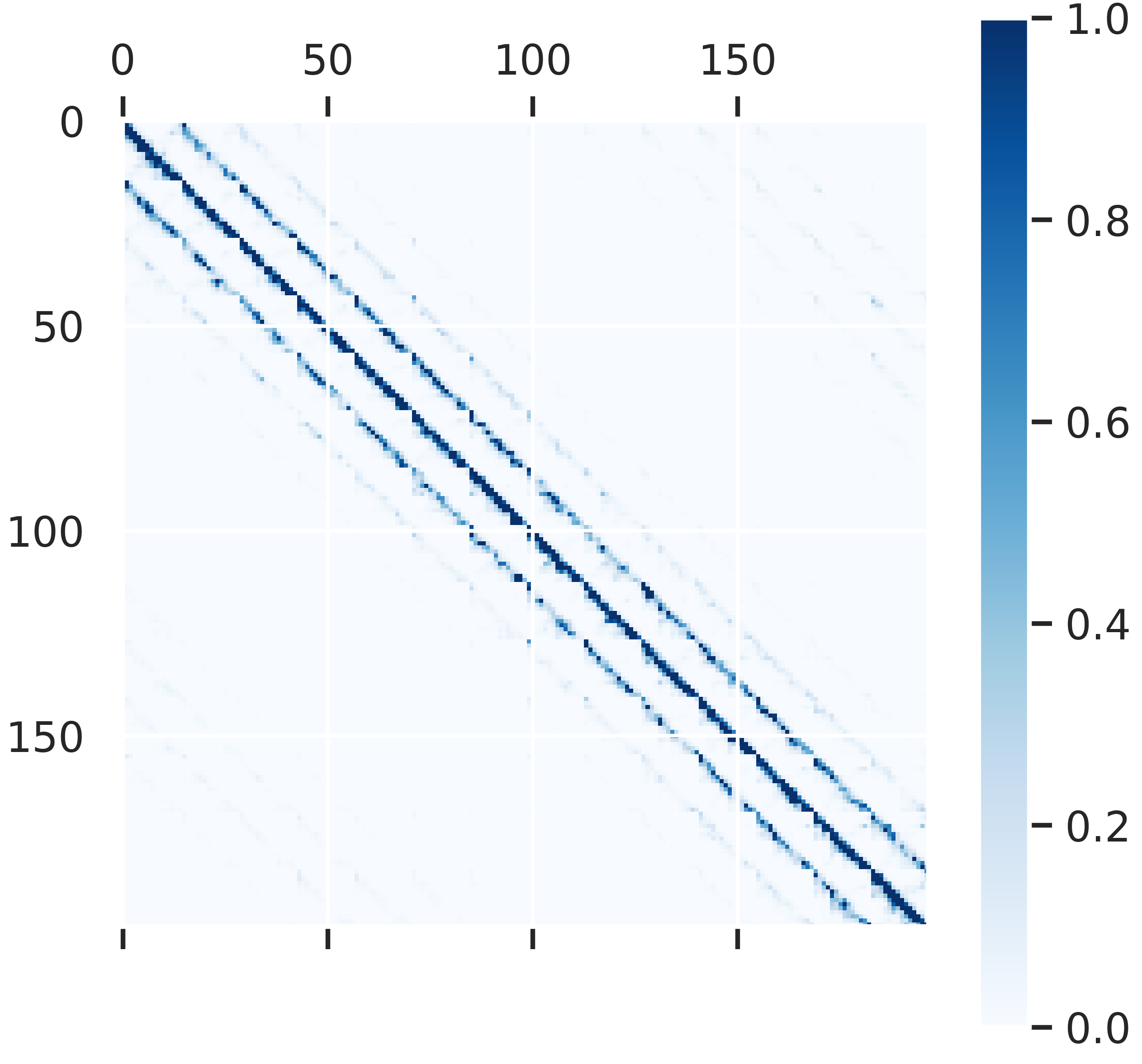}}
		\subfigure[Flatten Transformer]{\includegraphics[width=0.33\linewidth]{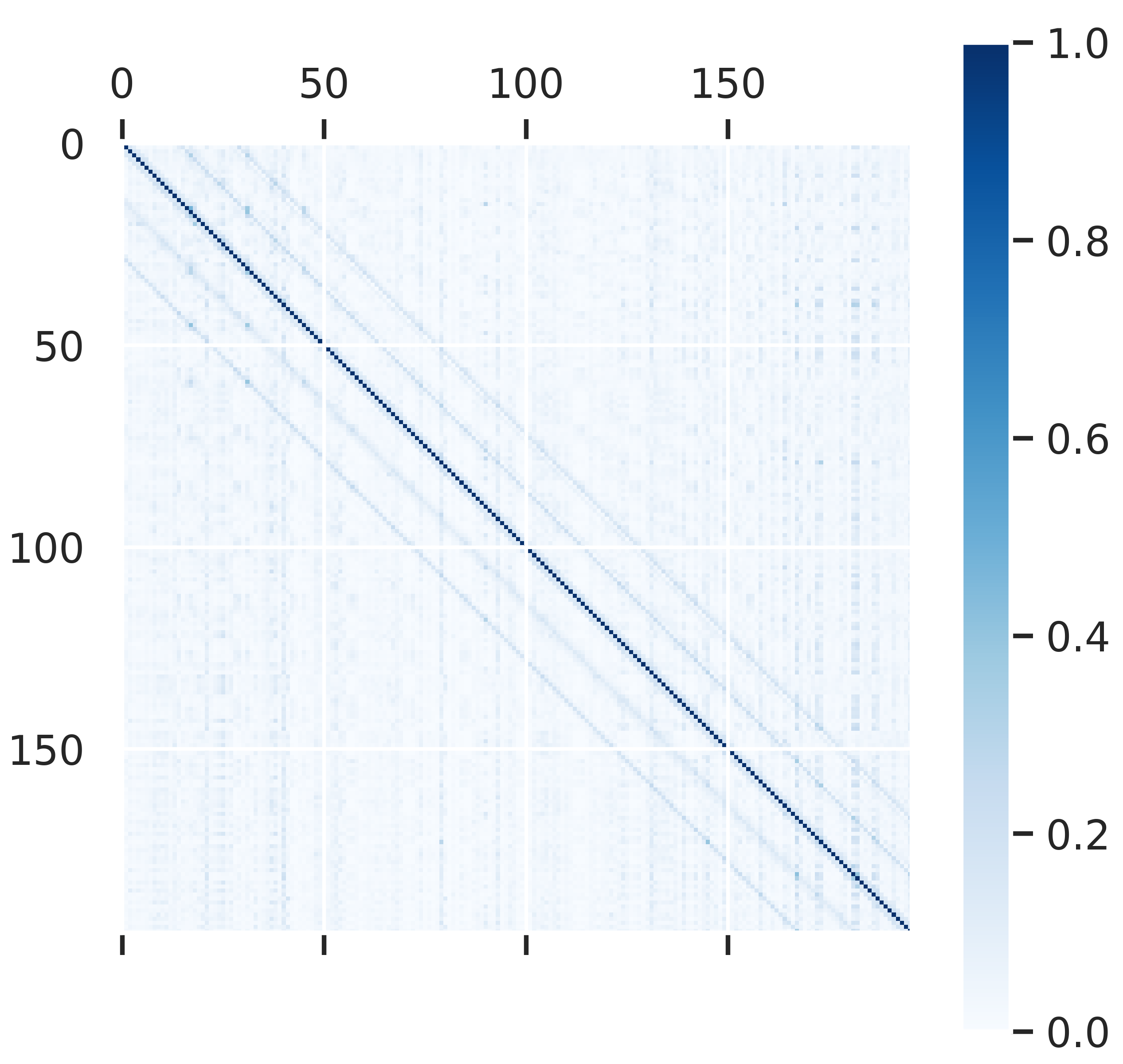}}
		\subfigure[SLAB~(Ours)]{\includegraphics[width=0.33\linewidth]{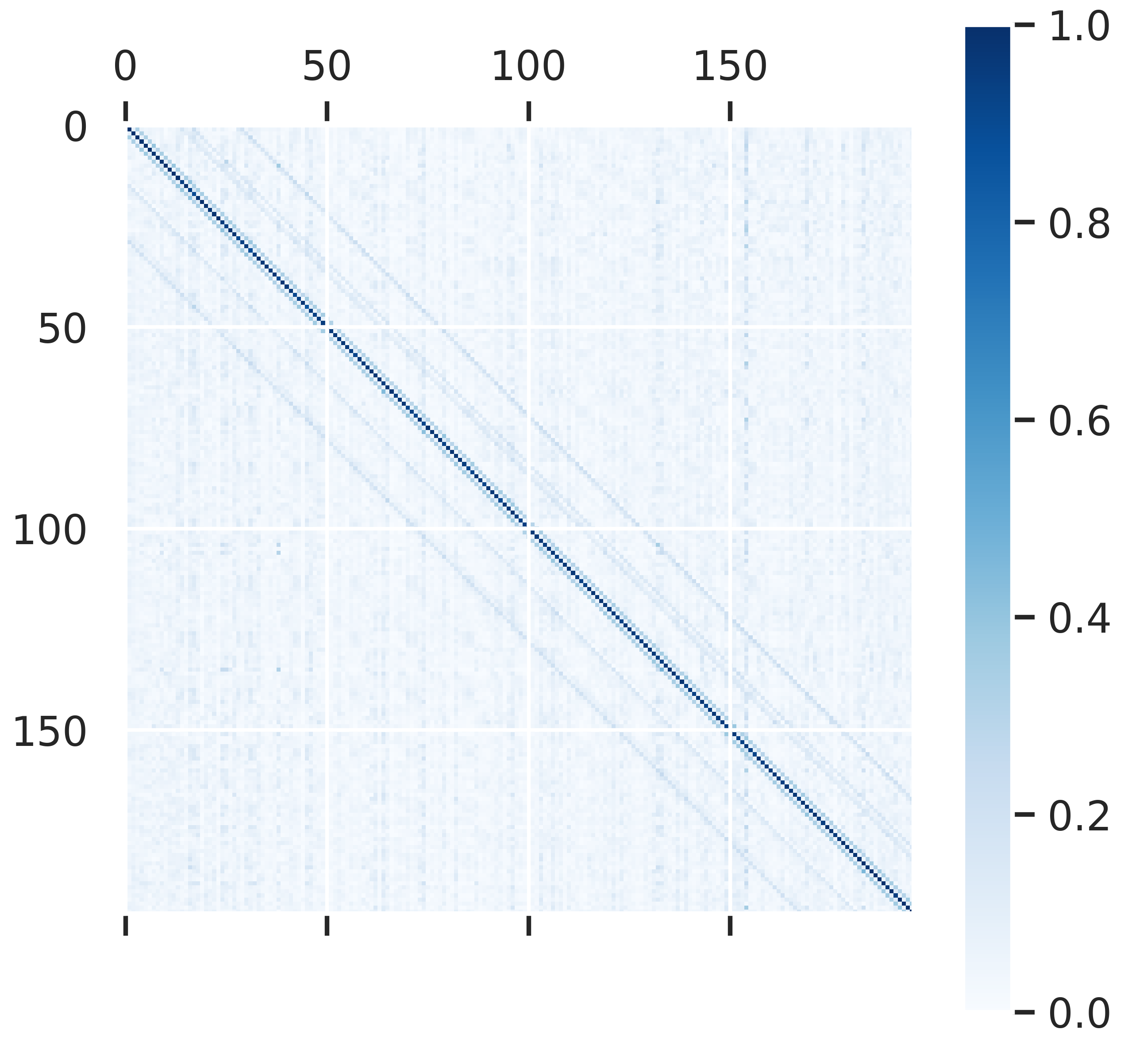}}
		\vspace{-4mm}
		\caption{Attention map ($196\times196$) from the 4rd block of the model based on DeiT-T. (a) Attention map of DeiT-T is full-rank. (b) With the help of depth-wise convolution, linear attention in Flatten Transformer has a high rank. (c) As simplified linear attention and progressive re-parameterized BatchNorm are applied in transformer, the model still keeps a high rank.}
		\label{attn-map}
	\end{center}
	\vspace{-5mm}
\end{figure*}

In this paper, we focus on building efficient transformers and propose a series of strategies, including a progressive strategy to replace the LayerNorm (LN) with the re-parameterized BatchNorm (BN) and the simplified linear attention (SLA) module. The proposed SLAB transformers obtains strong performance compared with prior methods while enjoying more computational efficacy.

\subsection{Progressive Re-parameterized BatchNorm}
LayerNorm requires statistic calculations in both training and inference, thus significantly hinders the running speed of transformers. On contrast, BatchNorm could be simply merged with linear layers during inference and is more suitable for efficient architectures. However, directly leveraging BatchNorm for transformers brings unsatisfactory performance~\cite{yao2021leveraging}. To this end, we propose to progressively replace LayerNorm with BatchNorm during training, and also propose a new re-parameterization formula of BatchNorm inspired by Ding~\etal~\yrcite{ding2021repvgg,ding2022re} to further improve the performance, as shown in Figure~\ref{model-structure}. 

\textbf{Re-parameterized BatchNorm.}
The proposed RepBN is formulated as:
\begin{equation}\label{eq:repbn}
	\mathrm{RepBN}(X) = \mathrm{BN}(X) + \eta X,
\end{equation}
where $\eta$ is a learnable parameter that is jointly trained in an end-to-end manner. Once the training is done, the RepBN could be re-parameterized as a norm form of BN, as shown in Lemma~\ref{lemma_repbn}.

\begin{lemma}\label{lemma_repbn}
	Denote a BN layer with mean $\mu$, standard deviation $\sigma$, rescale and shift parameters $\alpha$ and $\beta$ as $\mathrm{BN}(X; \mu, \sigma, \alpha, \beta)$. We can re-parameterize the RepBN in Eq.~\ref{eq:repbn} as:
	\begin{equation}\label{eq:repbn2bn}\small
		\mathrm{RepBN}(X; \mu, \sigma, \alpha, \beta) = \mathrm{BN}(X; \mu, \sigma, \alpha+\eta\sigma, \beta+\eta\mu).
	\end{equation}
\end{lemma}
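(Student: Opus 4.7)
The plan is to prove the identity by direct algebraic manipulation, exploiting the fact that the added term $\eta X$ is an affine function of $X$ and the normalized quantity $(X-\mu)/\sigma$ is also affine in $X$, so $\eta X$ can be absorbed into the scale and shift parameters of a single BN.

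First, I would unfold the BN operator in its standard inference-time form $\mathrm{BN}(X;\mu,\sigma,\alpha,\beta) = \alpha\cdot\frac{X-\mu}{\sigma} + \beta$, where $\mu,\sigma$ denote the running mean and standard deviation and $\alpha,\beta$ the learned affine parameters (all acting channel-wise). Substituting this into Eq.~\ref{eq:repbn} gives
\begin{equation*}
\mathrm{RepBN}(X) = \alpha\cdot\frac{X-\mu}{\sigma} + \beta + \eta X.
\end{equation*}

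Next, I would rewrite the raw input as $X = \sigma\cdot\frac{X-\mu}{\sigma} + \mu$, so that $\eta X = \eta\sigma\cdot\frac{X-\mu}{\sigma} + \eta\mu$. Substituting this identity back and grouping the $\frac{X-\mu}{\sigma}$ and constant terms separately yields
\begin{equation*}
\mathrm{RepBN}(X) = (\alpha + \eta\sigma)\cdot\frac{X-\mu}{\sigma} + (\beta + \eta\mu),
\end{equation*}
which is precisely $\mathrm{BN}(X;\mu,\sigma,\alpha+\eta\sigma,\beta+\eta\mu)$, establishing Eq.~\ref{eq:repbn2bn}.

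There is no serious obstacle; the statement is essentially a bookkeeping identity that follows from the affinity of BN at inference. The one point worth flagging is that the re-parameterization is valid only when $\mu$ and $\sigma$ are frozen running statistics rather than batch-dependent quantities, and that the scale/shift arithmetic must be interpreted channel-wise so that $\eta\sigma$ and $\eta\mu$ remain vectors of the correct shape compatible with $\alpha$ and $\beta$.
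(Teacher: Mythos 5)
Your proof is correct and follows exactly the same route as the paper's: unfold $\mathrm{BN}$ into its affine form, rewrite $\eta X$ as $\eta\sigma\cdot\frac{X-\mu}{\sigma}+\eta\mu$, and absorb the two pieces into the scale and shift parameters. Your closing remarks about frozen running statistics and channel-wise shapes are sensible caveats the paper leaves implicit, but the core argument is identical.
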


\begin{proof}
	\begin{equation}\label{eq:repbn2bn_proof}
		\begin{split}
		&\mathrm{RepBN}(X; \mu, \sigma, \alpha, \beta) = \mathrm{BN}(X; \mu, \sigma, \alpha, \beta) + \eta X\\
		& =\frac{X-\mu}{\sigma}\alpha + \beta + \eta X=\frac{X-\mu}{\sigma}\alpha + \beta + \frac{X}{\sigma}\sigma\eta\\
		& =\frac{X-\mu}{\sigma}\alpha + \beta + \frac{X-\mu}{\sigma}\sigma\eta+\mu\eta\\
		& =\frac{X-\mu}{\sigma}(\alpha+\eta\sigma) + (\beta+\eta\mu)\\
		& =\mathrm{BN}(X; \mu, \sigma, \alpha+\eta\sigma, \beta+\eta\mu).
		\end{split}
	\end{equation}
\end{proof}

Based on Lemma~\ref{lemma_repbn}, the distribution of RepBN's output is control by $\alpha+\eta\sigma$ and $\beta+\eta\mu$, which is corresponds to the variance and mean. RepBN can recover the distribution with the help of $\sigma$ and $\mu$. 

Meanwhile, when $\alpha=0, \beta=0$, it is equivalent to BatchNorm being skipped. When $\eta=0$, RepBN is converted into pure BatchNorm.

\textbf{Progressive LN $\rightarrow$ RepBN.}
To facilitate the training of a pure BN-based transformers, we propose to progressively transit the LN to RepBN during training, \textit{i.e.},
\begin{equation}\label{eq:prepbn}
	\mathrm{PRepBN}(X) = \gamma\mathrm{LN}(X) + (1 - \gamma)\mathrm{RepBN}(X),
\end{equation}
where $\gamma$ is a hyper-parameter to control the output of different normalization layers. Generally $\gamma=1$ at the begin of training when the LN dominates the architecture, and $\gamma=0$ at the end of training to make sure it transits to a pure BN-based transformer. 
We utilize a simple yet effective decay strategy for $\gamma$:
\begin{equation}\label{eq:gamma}
	\gamma = \dfrac{T - T_{cur}}{T}, \gamma \in [0, 1],
\end{equation}
where $T$ is the total steps of training with LayerNorm and $T_{cur}$ is the current step. This progressive strategy eases the difficulty of training a pure BN-based transformer and thus leads to strong performance on various tasks.

There are some other decay strategies for attenuating the value of $\gamma$ gradually, such as cosine decay and step decay. Empirically, we find that the linear strategy is one of the more effective and simpler.

\subsection{Simplified Linear Attention}

Attention module is the most import part in a transformer network, which is generally formulated as:
\begin{equation}\label{eq:attention}\small
	\begin{split}
		&Q=XW_{Q}, K=XW_{K}, V=XW_{V},\\
		&O_{i} = \sum_{j=1}^{N}\dfrac{\mathrm{Sim}(Q_{i}, K_{j})}{\sum_{j}\mathrm{Sim}(Q_{i}, K_{j})}V_{j},
	\end{split}
\end{equation}
where $W_Q, W_K, W_V \in \mathbb{R}^{C \times C}$ project the input tokens to query, key and value tensors, respectively. $\mathrm{Sim}(\cdot, \cdot)$ denotes the similarity function.
For the original form of attention, the similarity function is:
\begin{equation}\label{eq:softmax_attn}\small
	\mathrm{Sim_{softmax}}(Q_i , K_j) = \exp (\frac{Q_iK_j^{T}}{\sqrt{C}}),
\end{equation}
this softmax-based attention leads to high computational complexity. Several recent methods investigate the usage of linear attention to remove the softmax calculation thus improve the efficacy of transformers~\cite{han2023flatten}. 
However, these methods still suffer quite complex design and are not computation efficient enough.
In this paper, we propose a simplified linear attention (SLA) which is formulated as follow:
\begin{equation}\label{eq:sla}
	\small
	\begin{split}
		&{\rm Sim}_{SLA}\left(Q_{i},K_{j}\right)=\mathrm{ReLU}\left(Q_{i}\right){\mathrm{ReLU}\left(K_{j}\right)}^T,\\
		&\tilde {\rm O}_{i} = \sum_{j=1}^{N}\dfrac{\mathrm{Sim}_{SLA}(Q_{i}, K_{j})}{\sum_{j}\mathrm{Sim}_{SLA}(Q_{i}, K_{j})}V_{j},\\
		&\!{\rm O}_{SLA}\!=\tilde {\rm O}+\!{\rm DWC}(V),
	\end{split}
\end{equation}
where $DWC(\cdot)$ denotes a depth-wise convolution.
It is a simple yet efficient linear attention since it also enjoys the decoupling computation order by computing $K^TV$ first, and leads to great complexity reduction.
Moreover, only ReLU function and depth-wise convolution are explored and both operations are computation friendly in most hardware.

To demonstrate that our method still maintains feature diversity, we visualize the effect of attention map on DeiT-T that applied the strategy of progressive re-parameterized BatchNorm and simplified linear attention (SLAB), as shown in Figure~\ref{attn-map}. It can be find that a high rank is still kept for our proposed method, demonstrating its good capacity for capturing attention information.

\begin{table*}[ht]
	\caption{Comparison of different normalizations for various transformer architectures on ImageNet1K.}
	\begin{center}
	\resizebox{0.95\textwidth}{!}{
	\label{tab:sota_norm}
	\small
		\begin{tabular}{l|lcc|c}
			\toprule
			\textbf{Method} & \textbf{Normalization} & \textbf{FLOPs (G)} & \textbf{Throughput} & \textbf{Top-1 Acc. (\%)} \\
			\midrule
			\multirow{ 2}{*}{DeiT-T~\cite{touvron2021training}} & LN (Default) & 1.3 & 3432 & 72.2 \\
			& \bf PRepBN (Ours) & 1.3 & \bf 4194 & \bf 73.6 \\
			\midrule
			\multirow{3}{*}{DeiT-S~\cite{touvron2021training}} & LN (Default) & 4.6 & 952 & 79.8 \\
			& BN+FFNBN~\cite{yao2021leveraging} & 4.6 & 990 & 78.8\\
			& \bf PRepBN (Ours)  & 4.6 & \bf 990 & \bf 80.2 \\
			\midrule
			\multirow{ 2}{*}{PVT-T~\cite{wang2021pyramid}} & LN (Default)  & 1.9 & 1500 & 75.1 \\
			& \bf PRepBN (Ours)   & 1.9 & \bf 1756 & \bf76.0 \\
			\midrule
			\multirow{ 2}{*}{PVT-S~\cite{wang2021pyramid}} & LN (Default) & 3.8 & 814 & 79.8 \\
			& \bf PRepBN (Ours)   & 3.8 & \bf 911 & \bf80.1 \\
			\midrule
			\multirow{ 2}{*}{PVT-M~\cite{wang2021pyramid}} & LN (Default) & 6.7 & 520 & 81.2 \\
			& \bf PRepBN (Ours)   & 6.7 & \bf 556 & \bf81.7 \\
			\midrule
			\multirow{3}{*}{Swin-T~\cite{liu2021swin}} & LN (Default)  & 4.5 & 740 & 81.3 \\
			& BN+FFNBN~\cite{yao2021leveraging} & 4.5 & 805 & 80.9\\
			& \bf PRepBN (Ours)  & 4.5 & \bf 805 & \bf 81.4 \\
			\midrule
			\multirow{3}{*}{Swin-S~\cite{liu2021swin}} & LN (Default) & 8.7 & 421 & 83.0 \\
			& BN+FFNBN~\cite{yao2021leveraging} & 8.7 & 452 & 82.8\\
			& \bf PRepBN (Ours)  & 8.7 & \bf 452 & \bf 83.2 \\
			\midrule
			\multirow{3}{*}{Swin-B~\cite{liu2021swin}} & LN (Default) & 15.4 & 274 & 83.5 \\
			& BN+FFNBN~\cite{yao2021leveraging} & 15.4 & 284 & 83.1\\
			& \bf PRepBN (Ours)  & 15.4 & \bf 284 & \bf 83.6 \\
			\bottomrule
		\end{tabular}
	}
	\end{center}
	\vskip -0.17in
\end{table*}

\begin{figure*}[ht]
	\vskip -0.05in
	\begin{center}
		\subfigure[DeiT]{\includegraphics[width=0.33\linewidth]{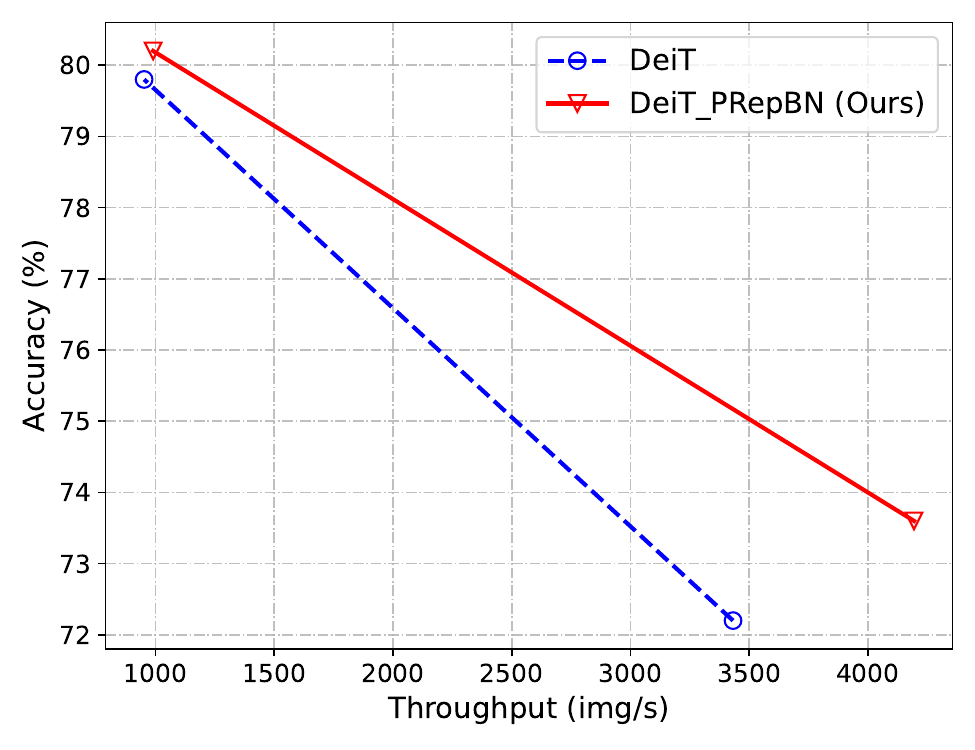}}
		\subfigure[PVT]{\includegraphics[width=0.33\linewidth]{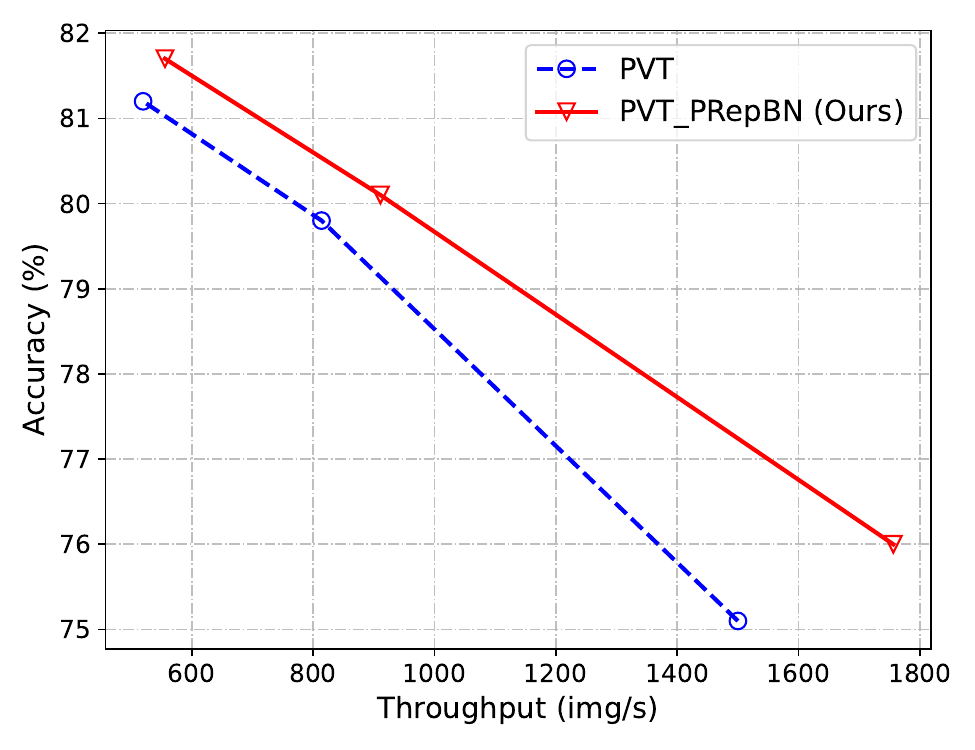}}
		\subfigure[Swin]{\includegraphics[width=0.33\linewidth]{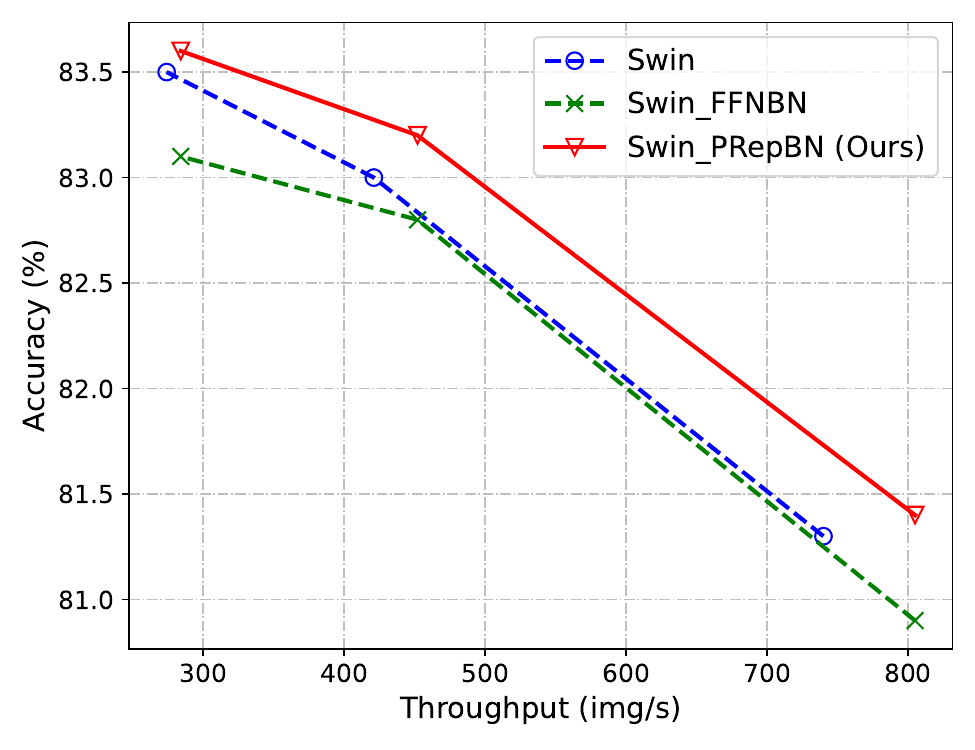}}
		\vspace{-1.5em}
		\caption{Comparisons of accuracy and throughput for different methods on ImageNet1k.}
		\label{acc-thr}
	\end{center}
	\vskip -0.25in
\end{figure*}

\section{Experiments}\label{sec:exp}

In this section, we evaluate our method on various computer vision tasks including image classification, object detection and instance segmentation and also on language modeling task. We conduct extensive experiments for various backbones with our proposed progressive re-parameterized BatchNorm and simplified linear attention module. We train our model on ImageNet-1K~\cite{deng2009imagenet} for image classification, and evaluate the effect of object detection and instance segmentation tasks on COCO dataset~\cite{lin2014microsoft}. At last, we ablate the important design elements of our proposed method on classification task.

\subsection{Image Classification}

\textbf{Settings.} For image classification, we adhere to the configuration outlined in ~\cite{touvron2021training}. We train all models for 300 epochs with AdamW optimizer, incorporating a cosine decay learning rate scheduler with 20 epochs of linear warm-up. The batch size is set to 1024, the initial learning rate is 0.001, and the weight decay value is 0.05. In the case of models utilizing the proposed progressive re-parameterized BatchNorm, a reduced droppath~\cite{larsson2016fractalnet} rate is applied. The linear decay steps $T$ for PRepBN slightly varies across different backbones. Due to the variance shift induced by droppath, we freeze the model parameters and exclusively update the statistics of re-parameterized BatchNorm for 10 epochs at the end of training. 
We also demonstrate the effectiveness  of our method with both progressive re-parameterized BatchNorm and simplified linear attention. We follow the setting of ~\cite{han2023flatten} on macro architecture design and training. %
All reported results of throughput/latency are obtained on a single V100 GPU. For classification task, we measure FLOPs as well as the throughput/latency for the image resolution of 224$\times$224.

\textbf{Results on image classification task.} Table~\ref{tab:sota_norm} presents the results of different backbones with our PRepBN normalization. 
Our proposed PRepBN demonstrates comparable or even superior performance when compared with LayerNorm. More specifically, the models using our PRepBN as normalization layer exhibit performance improvements ranging from 0.1\% to 1.4\%. Notably, PRepBN is amenable to fusion with other linear operations, allowing it to obtain more efficient inference.
We further compare our method with BN+FFNBN~\cite{yao2021leveraging} which also aims to train transformer model with BatchNorm. It can be seen that our PRepBN achieves consistent improvements on different backbones. For example, our proposed PRepBN achieves 80.2\% top-1 accuracy on DeiT-S model, which is 1.4\% better than the BN+FFNBN method. For swin transformer, our PRepBN brings +0.5\%, +0.4\% and +0.5\% accuracy gain than BN+FFNBN on Swin-T, Swin-S and Swin-B models.
As shown in Figure~\ref{acc-thr}, we present a comparative analysis of our method across DeiT-based, PVT-based, and Swin-based models. It is evident that transformers equipped with our PRepBN achieve higher throughput while maintaining similar accuracy levels.

\begin{table*}[ht]
	\caption{Comparison of different linear transformers on ImageNet1K.}
	\begin{center}
	\resizebox{0.94\textwidth}{!}{
	\setlength{\tabcolsep}{16pt}
	\small
	\label{tab:sota}
		\begin{tabular}{l|cc|c}
			\toprule
			\textbf{Method}  & \textbf{FLOPs (G)} & \textbf{Latency (ms)} & \textbf{Top-1 Acc. (\%)} \\
			\midrule
			Flatten-DeiT-T~\cite{han2023flatten}  & 1.1 & 15.2 & 74.1\% \\
			\bf SLAB-DeiT-T (Ours) & 1.1 & \bf 9.6 & \bf 74.3\% \\
			Flatten-DeiT-S~\cite{han2023flatten}  & 4.4 & 15.5 & 80.4\% \\
			\bf SLAB-DeiT-S (Ours) & 4.4 & \bf 10.4 & \bf 80.0\% \\
			\midrule
			Flatten-PVT-T~\cite{han2023flatten}  & 2.0 & 10.8 & 77.8\% \\
			\bf SLAB-PVT-T (Ours) & 2.0 & \bf 8.0 & \bf 76.5\% \\
			\midrule
			Flatten-CSwin-T~\cite{han2023flatten}  & 4.3 & 32.4  & 83.1\% \\
			\bf SLAB-CSwin-T (Ours) & 4.3  & \bf 29.3  & \bf 82.8\% \\
			\midrule
			Flatten-Swin-T~\cite{han2023flatten}  & 4.5 & 10.9 & 82.1\% \\
			\bf SLAB-Swin-T (Ours)  & 4.5 & \bf 8.7 & \bf 81.8\% \\
			Flatten-Swin-S~\cite{han2023flatten}  & 8.8 & 18.6 & 83.5\% \\
			\bf SLAB-Swin-S (Ours)  & 8.7 & \bf 16.2 & \bf 83.6\% \\
			\bottomrule
		\end{tabular}
	}
	\end{center}
	\vskip -0.2in
\end{table*}

\begin{table*}[ht]
	\caption{Mask R-CNN Object Detection \& Instance Segmentation on COCO.}
	\resizebox{\textwidth}{!}{
	\setlength{\tabcolsep}{0.9pt}
	\small
	\label{tab:sota_coco}
		\begin{tabular}{l|c|c|ccc|ccc|ccc|ccc}
			\toprule
			\textbf{Method}  & \textbf{Schd.} & \textbf{Lat. (ms)} & $\mathbf {AP^{b}}$ & \textbf{$AP^{b}_{50}$} &  \textbf{$AP^{b}_{75}$} & \textbf{$AP^{b}_{s}$} & \textbf{$AP^{b}_{m}$} & \textbf{$AP^{b}_{l}$} & $\mathbf {AP^{m}}$ & \textbf{$AP^{m}_{50}$} & \textbf{$AP^{m}_{75}$} & \textbf{$AP^{m}_{s}$}  & \textbf{$AP^{m}_{m}$}  & \textbf{$AP^{m}_{l}$} \\
			\midrule
			Swin-T~\cite{liu2021swin}  & $1\times$ & 51.0 &  43.1 & 66.0 & 47.0 & 27.3 & 46.4 & 56.0 &  39.4 & 62.8 & 42.0 & 23.0 & 42.9 & 53.7 \\
			\bf Swin-T-PRepBN (Ours)  & $1\times$ & \bf 43.0 & \bf 42.9 & 65.8 & 46.8 & 27.7 & 46.2 & 55.4 & \bf 39.3 & 62.6 & 41.9 & 23.1 & 42.8 & 53.6 \\
			\midrule
			Swin-S~\cite{liu2021swin}  & $1\times$ & 63.1 &  46.2 & 68.7 & 50.9 & 30.2 & 49.7 & 60.8 &  41.7 & 65.6 & 44.5 & 25.1 & 45.5 & 57.3 \\
			\bf Swin-S-PRepBN (Ours)  & $1\times$ &\bf 57.3 & \bf 45.9 & 68.4 & 50.2 & 29.8 & 49.3 & 60.2 & \bf 41.4 & 65.0 & 44.7 & 24.9 & 44.9 & 57.0 \\
			\midrule
			PVT-T~\cite{wang2021pyramid}  & $1\times$ & 46.0 & 36.6 & 58.9 & 39.2 & 21.3 & 38.9 & 48.9 &  34.9 & 56.2 & 37.0 & 19.6 & 37.2 & 48.2 \\
			\bf PVT-T-PRepBN (Ours)  & $1\times$ &\bf 43.2 &\bf 36.5 & 59.0 & 39.2 & 20.9 & 38.4 & 50.1 & \bf 34.4 & 55.7 & 36.5 & 18.7 & 36.3 & 48.9 \\
			\midrule
			PVT-S~\cite{wang2021pyramid}  & $1\times$ &64.0 &  40.5 & 63.2 & 44.1 & 23.4 & 43.4 & 54.9 &  37.9 & 60.2 & 40.6 & 20.8 & 40.6 & 52.6 \\
			\bf PVT-S-PRepBN (Ours)  & $1\times$ &\bf 59.6 & \bf 40.6 & 63.3 & 43.9 & 24.4 & 43.1 & 55.4 & \bf 38.0 & 60.6 & 40.5 & 21.4 & 40.5 & 53.5 \\
			\midrule
			Flatten-Swin-T~\cite{han2023flatten}  & $1\times$ & 60.0 &  44.2 & 67.3 & 48.5 & 29.4 & 47.5 & 57.0 &  40.2 & 63.8 & 43.0 & 24.5 & 43.8 & 54.7 \\
			\bf SLAB-Swin-T (Ours)  & $1\times$ & \bf 54.1 & \bf 43.9 & 66.5 & 48.1 & 28.6 & 47.4 & 56.7 & \bf 40.1 & 63.3 & 43.1 & 24.3 & 43.8 & 54.2 \\
			\bottomrule
		\end{tabular}
	}
	\vspace{-1em}
\end{table*}

Table~\ref{tab:sota} presents the performance of our SLAB transformer, which is powered by our proposed progressive re-parameterized BatchNorm and simplified linear attention module. We compare our model  with Flatten transformer, which utilizes focused linear attention for higher efficiency. 
Experiments  on various architectures including DeiT~\cite{touvron2021training}, PVT~\cite{wang2021pyramid}, CSwin~\cite{dong2022cswin} and Swin~\cite{liu2021swin} demonstrate than our SLAB transformer obtains better performance than Flatten transformer. More specifically, our SLAB-Swin-T model obtains $83.6\%$ top-1 accuracy on ImageNet-1K with $16.2$ms latency, which is $2.4$ms less than that of Flatten-Swin with $0.1\%$ higher accuracy. Our models are more computational efficient mainly due to more hardware friendly normalization layers as well as the simplified linear attention module.
Figure~\ref{result1} also shows the trade-off between accuracy and latency of our SLAB transformer, Flatten tranformer~\cite{han2023flatten} and the original Swin transformer, which demonstrate better performance of our model.

\subsection{Object Detection}

\textbf{Settings.} We use Mask R-CNN~\cite{he2017mask} to evaluate the effectiveness of our method on COCO  dataset for object detection and instance segmentation tasks. The backbones used in Mask R-CNN are pretrained on ImageNet-1K. 
All models are trained for $1\times$ schedule, \ie, 12 epochs. The latency is measured with a batch size of 1 on V100 GPU for a average value of 100 rounds.

\begin{figure}[tb]
	\centering
	\includegraphics[width=0.99\columnwidth]{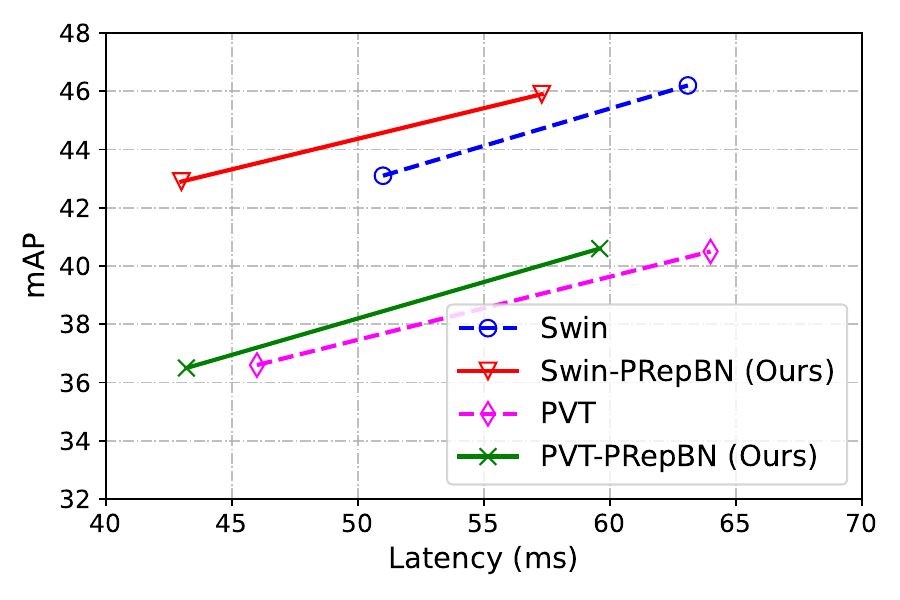}
	\vspace{-5mm}
	\caption{Comparison of different normalization on COCO.}
	\label{coco_det}
	\vspace{-7.8mm}
\end{figure}

\begin{table*}[ht]
	\caption{Results of perplexity on Wikitext-103 dataset. 256/480 indicate evaluation context window sizes.}
	\setlength{\tabcolsep}{9pt}
	\renewcommand{\arraystretch}{1.1}
	\small
	\label{tab:lm}
	\vspace{-0mm}
	\begin{center}
		\begin{tabular}{l|c|cccc|c}
			\toprule
			\multirow{ 2}{*}{\textbf{Model}} & \multirow{ 2}{*}{\textbf{\# Param.}} & \multicolumn{ 2}{|c}{\textbf{256}} & \multicolumn{ 2}{c|}{\textbf{480}} & \multirow{ 2}{*}{\textbf{Inference Time(ms/t)}} \\
			& & \textbf{Val.} & \textbf{Test} & \textbf{Val.} & \textbf{Test} & \\
			\midrule
			Adaptive Inputs w/ LN & 247M & 19.5 & 20.2 & 19.3 & 20.0 & 13.9 \\
			\bf Adaptive Inputs w/ PRepBN (Ours) & 247M & \bf 19.2 & \bf 20.0 & \bf 19.1 & \bf 19.8 & \bf 12.9 \\
			\bottomrule
		\end{tabular}
	\end{center}
	\vspace{-7mm}
\end{table*}

\begin{table*}[!ht]
	\caption{Experimental results of the proposed method for LLaMA-350M on various benchmarks.}
	\resizebox{\textwidth}{!}{
	\setlength{\tabcolsep}{1.5pt}
	\renewcommand{\arraystretch}{1.12}
	\small
	\label{tab:llama}
	\centering
	\begin{tabular}{l|c|ccccccccc|l}
		\toprule
		\bf Model & \bf Throughput & \bf ARC-C & \bf ARC-E & \bf BoolQ & \bf COPA & \bf HellaSwag & \bf PIQA & \bf WinoGrande & \bf OpenBookQA & \bf SciQ & \bf Avg. \\ 
		\midrule
		LLaMA-350M & 44.0 & 22.95 & 46.13 & 59.27 & 64 & 33.19 & 64.36 & 49.09 & 29.6 & 75.3 & 49.32 \\ 
		\bf w/ PRepBN (Ours)& \bf 50.4 & 23.55  & 44.44 & 60.67 & 66 & 32.76  & 64.04 & 49.72 &30.0 &76.8 & \bf 49.78 \\
		\bottomrule
	\end{tabular}
	}
\vspace{-5mm}
\end{table*}

\textbf{Results on object detection task.} We compare our proposed PRepBN with standard LayerNorm for various backbones including Swin and PVT for object detection task.  The results are shown in Table~\ref{tab:sota_coco}. 
It reveals that our method achieves quite comparable performance with original models equipped with LayerNorm. Taking advantages of offline normalization, the models with our proposed PRepBN obtain lower inference latency. For example, the latency of Mask R-CNN exhibits a reduction from 64ms to 59.6ms when PVT-S backbone is equipped with our PRepBN and the accuracy for object detection and instance segmentation is similar. 
A more clear visualization for the trade-off between mAP and latency is also presented in Figure~\ref{coco_det}, which demonstrates that our proposed method achieves better overall performance on object detection.

\subsection{language modeling}
\textbf{Settings.} We also evaluate our proposed method on language modeling task based on Adaptive Inputs~\cite{baevski2018adaptive}. We train our models on the Wikitext-103~\cite{merity2016pointer} dataset, which contains over 100 million tokens.  We set the number of tokens per GPU to 4096 and train on 8 GPUs. The number of tokens per sample is limit to 512. %
We also apply our method on LLaMA-350M model, following the similar architecture and training settings as prior work~\cite{yang2023gated,he2024densemamba}.

\textbf{Results on language modeling task.} As shown in Table~\ref{tab:lm}, our PRepBN achieves similar perplexity with the model equipped with LayerNorm, while the latency reduces from 13.9 ms to 12.9 ms per token.  Besides, We apply our PRepBN on more modern large language models such as LLaMA, which adopts the variant of LayerNorm that removes the computation of mean values, \ie, RMSNorm. 
As shown in Table~\ref{tab:llama}, our method successfully boost the throughput from 44.0 to 50.4 tokens per second on V100 GPU, and obtains even slightly better average accuracy. These results demonstrate the effectiveness of our proposed PRepBN on language modeling task.

\begin{table}[tb]
	\vspace{-3mm}
	\caption{Ablation studies for the impact of simplified linear attention and progressive re-parameterized BatchNorm.}
	\label{tab:abla_module}
	\resizebox{\linewidth}{!}{
	\setlength{\tabcolsep}{7.5pt}
	\vspace{-1mm}
	\begin{tabular}{l|cc|c}
		\toprule
		\textbf{Method}  & \textbf{FLOPs} & \textbf{Lat. (ms)} & \textbf{Acc. (\%)} \\
		\midrule
		Flatten-DeiT-T  & 1.1 G & 15.2 & 74.1 \\
		+ SLA  & 1.1 G & 10.2 & 73.0 \\
		+ SLA + PRepBN  & 1.1 G &\bf  9.6 & \bf 74.3 \\
		\midrule
		Flatten-PVT-T  & 2.0 G & 10.8 & 77.8 \\
		+ SLA  & 2.0 G & 8.5 & 75.2 \\
		+ SLA + PRepBN  & 2.0 G &\bf  8.0 &\bf  76.5 \\
		\midrule
		Flatten-Swin-T & 4.5 G & 10.9 & 82.1 \\
		+ SLA  & 4.5 G & 9.5 & 81.9 \\
		+ SLA + PRepBN  & 4.5 G &\bf  8.7 &\bf  81.8 \\
		\midrule
		Flatten-Swin-S & 8.8 G & 18.6 & 83.5 \\
		+ SLA  & 8.8 G & 18.0 & 83.4 \\
		+ SLA + PRepBN  & 8.7 G &\bf  16.2 &\bf  83.6 \\
		\bottomrule
	\end{tabular}
	}
	\vspace{-7mm}
\end{table}

\subsection{Ablation Studies}

In this section, we conduct extensive ablation studies to demonstrate the impact of our key designs.

\textbf{The impact of SLA and PRepBN.} We first explore the impact of the simplified linear attention (SLA) module and progressive re-parameterized BatchNorm (PRepBN) on different backbones. As shown in Table~\ref{tab:abla_module}, utilizing our simplified linear attention (SLA) brings consistent improvement for efficiency. For DeiT and PVT, our SLA obtains significant latency reduction and a few accuracy drop. Moreover, Swin transformers equipped with our SLA achieve quite comparable accuracy with that of original ones but with lower latency. 
In addition, the latency could be further reduced by replacing LayerNorm by our proposed progressive re-parameterized BatchNorm (PRepBN). 
This strategy hardly affects the accuracy and even recover the accuracy of model like DeiT and PVT. 
Combining these two strategies, the latency is reduced by 5.6 ms when the accuracy is improved by 0.2\% for DeiT-T. Moreover, our method obtains similar accuracy and harvests 2.2 ms and 2.4 ms latency reduction for Swin-T and Swin-S models. 

\begin{table}[tb]
	\vspace{-2mm}
	\caption{Ablation studies for the impact of progressive strategy and re-parameterized BatchNorm.}
	\vspace{-1mm}
	\setlength{\tabcolsep}{16pt}
	\renewcommand{\arraystretch}{1.1}
	\label{tab:abla_bn}
	\begin{center}
		\begin{tabular}{l|c}
			\toprule
			\textbf{Method} & \textbf{Acc. (\%)} \\
			\midrule
			DeiT-T-BN & 71.9 \\
			+ Progressive Strategy  & 73.1 \\
			+ Progressive Strategy + RepBN & \bf 73.6 \\
			\bottomrule
		\end{tabular}
	\end{center}
	\vspace{-9mm}
\end{table} 

\textbf{Ablation study for PRepBN.} 
We investigate key components of our proposed PRepBN, \ie, the progressive strategy and re-parameterized BatchNorm (RepBN). Directly training a BatchNorm-based transformer leads to quite unstable training, either obtaining inferior performance  or collapse in training (\eg, DeiT-S and Flatten-Swin-T).  
To avoid the variance shift~\cite{li2019understanding} caused by droppath, which will influence the performance of BatchNorm, we simply set the droppath rate to 0 on DeiT-T model. 
As shown in Table~\ref{tab:abla_bn}, applying progressive strategy on a  BatchNorm-based DeiT-T model brings 1.2\% accuracy gain. 
We further utilize our RepBN in the model and the accuracy increases to 73.6\%. These results demonstrate that both our proposed progressive strategy and re-parameterized BatchNorm (RepBN) are beneficial for training a pure BatchNorm-based transformer. 

\section{Conclusion}

In this paper, we investigates the computational bottleneck modules of transformer and propose novel strategies including progressive Re-parameterized BatchNorm  and simplified linear attention to obtain efficient transformer architectures. 
Our method progressively replace LayerNorm with re-parameterized BatchNorm  during training to obtain lossless accuracy, while leveraging the efficiency advantages of BatchNorm during inference. Additionally, we devise a simplified linear attention mechanism that attains  comparable performance with other linear attention methods but with less computational cost. Through extensive experiments for both computer vision and language modeling tasks, we showcase that our method achieves stronger performance with respect to accuracy and efficiency than prior methods and sheds light into the design of efficient transformer.

\noindent \textbf{Acknowledgements.} 
We gratefully acknowledge the support of MindSpore~\cite{mindspore}, CANN (Compute Architecture for Neural Networks) and Ascend AI Processor used for this research.

\section*{Impact Statements}
This paper presents work whose goal is to advance the field of Deep Learning. There are many potential societal consequences of our work, none which we feel must be specifically highlighted here.

\bibliography{example_paper}
\bibliographystyle{icml2024}

\newpage
\appendix
\onecolumn
\setcounter{table}{0}
\renewcommand{\thetable}{A\arabic{table}}

\section{Detailed hyper-parameter settings.}

The detailed hyper-parameter settings are provided in Table~\ref{tab:hyperparameters setting}.
For image classification, we follow the setting of DeiT~\cite{touvron2021training}. We use AdamW as our default optimizer and train all of model for 300 epochs with a cosine decay learning rate scheduler and 20 epochs of linear warm-up. The batch size is set to 1024, an initial learning rate is 0.001, and the value of weight decay is 0.05. For model used proposed progressive re-parameterized batchnorm, we employ an smaller rate of droppath~\cite{larsson2016fractalnet}. 
Owing to the variance shift caused by droppath, We freeze the model parameters and only update the statistics of re-parameterized BatchNorm for 10 epochs at the end of training. %
\begin{table*}[ht]
	\vskip -0.2in
	\caption{The settings of hyperparameters on different models for image classification task.}
	\setlength{\tabcolsep}{9pt}
	\small
	\label{tab:hyperparameters setting}
	\begin{center}
		\begin{tabular}{l|cc|ccc|ccc}
			\toprule
			\textbf{Name}  & \textbf{DeiT-T} & \textbf{DeiT-S} & \textbf{PVT-T} & \textbf{PVT-S} & \textbf{PVT-M} & \textbf{Swin-T} & \textbf{Swin-S} & \textbf{Swin-B} \\
			\midrule
			Epoch  & 300 & 300 & 300 & 300 & 300 & 300 & 300 & 300 \\
			Batch Size  & 1024 & 1024 & 1024 & 1024 & 1024 & 1024 & 1024 & 1024 \\
			Learning rate & 1e-3 & 1e-3 & 1e-3 & 1e-3 & 1e-3 & 1e-3 & 1e-3 & 1e-3 \\
			Warmup Steps  & 20 & 20 & 20 & 20 & 20 & 20 & 20 & 20 \\
			Optimizer & AdamW & AdamW & AdamW & AdamW & AdamW & AdamW & AdamW & AdamW \\
			Droppath Rate & 0.0 & 0.0 & 0.0 & 0.1 & 0.3 & 0.1 & 0.3 & 0.3 \\
			Linear Decay Steps  & 0 & 3e5 & 0 & 3e5 & 3e5 & 0 & 3e5 & 3e5 \\
			\bottomrule
		\end{tabular}
	\end{center}
	\vskip -0.2in
\end{table*}

\section{Combination with post-quantization method.}
Our method focuses on replacing LayerNorm with BatchNorm to obtain inference speed-up without performance degradation. It is a complementary strategy with common model compression methods like weight quantization, pruning or distillation and these methods can be combined to achieve a better performance. As a proof of concept, we conduct post-quantization using RepQ-ViT~\cite{li2023repq} on DeiT-Tiny model that is trained with our PRepBN strategy. As shown in the below table, applying W8A8 quantization on top of our method still achieves the accuracy of 73.6\%, while further reducing the computational cost to only 0.33 GFLOPs. This demonstrates that our method can be effectively combined with other compression methods such as quantification. 

\begin{table}[!ht]
	\vskip -0.2in
	\caption{The settings of hyperparameters on different models for image classification task.}
	\centering
	\small
	\begin{tabular}{l|ccc}
		\toprule
		\bf Method & \bf FLOPs (G) & \bf Throughput (images/s) & \bf Top-1 Acc (\%) \\ 
		\midrule
		DeiT-T & 1.3 & 3432 & 72.2 \\ 
		w/ PRepBN (Ours) & 1.3 & 4194 & \bf 73.6 \\ 
		w/ PRepBN (Ours) + RepQ-ViT (W8A8) & 0.33 & - & \bf 73.6 \\ 
		\bottomrule
	\end{tabular}
\end{table}

\end{document}